\documentclass[letterpaper]{article} 
\usepackage{aaai24}  
\usepackage{times}  
\usepackage{helvet}  
\usepackage{courier}  
\usepackage[hyphens]{url}  
\usepackage{graphicx} 
\urlstyle{rm} 
\usepackage{natbib}  
\usepackage{caption} 
\frenchspacing  
\setlength{\pdfpagewidth}{8.5in} 
\setlength{\pdfpageheight}{11in} 
%
\usepackage[linesnumbered, ruled]{algorithm2e}
\usepackage{cite}
\usepackage{float}
\usepackage{tikz}
\usepackage{amsmath}
\usepackage{amsfonts}
\usepackage{amssymb}
\usepackage{cite}
\usepackage{amsthm}
\usepackage{makecell}
\usepackage{booktabs}
\usepackage{tabularx}
\usepackage{subcaption}
\usepackage{soul}
%
\usepackage{newfloat}
\usepackage{listings}
\DeclareCaptionStyle{ruled}{labelfont=normalfont,labelsep=colon,strut=off} 
\lstset{%
	basicstyle={\footnotesize\ttfamily},
	numbers=left,numberstyle=\footnotesize,xleftmargin=2em,
	aboveskip=0pt,belowskip=0pt,%
	showstringspaces=false,tabsize=2,breaklines=true}
\floatstyle{ruled}
\newfloat{listing}{tb}{lst}{}
\floatname{listing}{Listing}

\newtheorem{definition}{Definition}

\newtheorem{proposition}{Proposition}

\newtheorem{formulation}{Problem Formulation}

\newcommand{\X}{\mathbf{X}}

\newcommand{\h}{\mathbf{h}}
\newcommand{\Z}{\mathbf{Z}}

\newcommand{\I}{\mathbf{I}}

\pdfinfo{
/Title (Long-Term Fair Decision Making through Deep Generative Models)
/Author (Yaowei Hu, Yongkai Wu, Lu Zhang)
/TemplateVersion (2024.1)
}

\setcounter{secnumdepth}{0} 

%


\title{Long-Term Fair Decision Making through Deep Generative Models}
\author{
    Yaowei Hu\textsuperscript{\rm 1},
    Yongkai Wu\textsuperscript{\rm 2},
    Lu Zhang\textsuperscript{\rm 1}\\
}
\affiliations {
    \textsuperscript{\rm 1} University of Arkansas \\
    \textsuperscript{\rm 2} Clemson University \\
    yaoweihu@uark.edu, yongkaw@clemson.edu, lz006@uark.edu
}


\begin{document}

\maketitle

\begin{abstract}
This paper studies long-term fair machine learning which aims to mitigate group disparity over the long term in sequential decision-making systems.
To define long-term fairness, we leverage the temporal causal graph and use the 1-Wasserstein distance between the interventional distributions of different demographic groups at a sufficiently large time step as the quantitative metric. Then, we propose a three-phase learning framework where the decision model is trained on high-fidelity data generated by a deep generative model. We formulate the optimization problem as a performative risk minimization and adopt the repeated gradient descent algorithm for learning. The empirical evaluation shows the efficacy of the proposed method using both synthetic and semi-synthetic datasets.
\end{abstract}

 \section{Introduction}
Machine learning models are extensively utilized for significant decision-making scenarios, like college admissions \cite{baker2021algorithmic}, banking loans \cite{lee2021algorithmic}, job placements \cite{schumann2020we}, and evaluations of recidivism risks \cite{berk2021fairness}. 
Fair machine learning, which aims to reduce discrimination and bias in machine-automated decisions, is one of the keys to enabling broad societal acceptance of large-scale deployments of decision-making models. In the past decade, various notions, metrics, and techniques have emerged to address fairness in machine learning. For an overview of fair machine learning studies, readers are directed to recent surveys \cite{tang2022and,alves2023survey}.

However, our society is marked by pervasive group disparities. For example, in the context of bank loans, disparities in creditworthiness may be observed among different racial groups. These disparities can arise from systemic or historic factors that influence variables like credit score, employment history, and income. As another example, disparities may be observed in the qualification of students for college admission which could be influenced by factors such as the availability of higher education institutions in rural areas, the economic capacity of families, exposure to college preparatory resources, etc. 
Currently, the majority of studies in fair machine learning are focused on the problem of building decision models for fair one-shot decision-making \cite{mehrabi2021survey,caton2020fairness}.
However, it has been shown that algorithms based on traditional fairness notions, such as demographic parity (DP) and equal opportunity (EO), cannot efficiently mitigate group disparities and could even exacerbate the gap \cite{liu2018delayed,zhang2020fair}. 
The challenges lie in the sequential nature of real-world decision-making systems, where each decision may shift the underlying distribution of features or user behavior, which in turn affects the subsequent decisions. For example, a bank loan decision can have an impact on an individual's credit score, income, etc., which may influence his/her future loan application. As a result, long-term fairness has been proposed to focus on the mitigation of group disparities rather than making fair decisions in a single time step, which is more challenging to achieve than traditional fairness notions due to the feedback loops between the decisions and features as well as the data distribution shift during the sequential decision-making process. Please refer to the technical appendix for a literature review. 

In political and social science, affirmative action has been implemented as a practice to pursue the goal of group parity \cite{crosby2006understanding}. For example, for promoting group diversity, affirmative action has been used in higher education allowing universities to consider race as a factor in admissions. However, affirmative action is controversial and has been criticized on the grounds that race-conscious decisions might lead to reverse discrimination against other groups or stigmatize all members of the target group as unqualified. \cite{fischer2007effects}. Recently, the US Supreme Court ruled that affirmative action policies are unconstitutional and race can no longer be regarded as a factor in admissions to US universities \cite{harvard2023}. 

Given the context, we ask whether we can mitigate group disparity and achieve long-term fairness while limiting the use of the sensitive attribute in decision-making models. Although policies such as expanded outreach or "top X-percent" plans are worth investigating in practice, in this paper, we explore the data-driven prospect of mitigating group disparity only through learning and deploying appropriate decision-making models, taking into account the system dynamics. 
In other words, we aim to learn a decision model that, once deployed, can lead to group parity by properly reshaping the distribution.
To this end, we leverage Pearl's structural causal model (SCM) and causal graph \cite{pearl2009causality} for modeling the system dynamics and defining long-term fairness. Formally, we treat longitudinal features and decisions in the sequential decision-making process as temporal variables. Then, we model the system using temporal causal graphs \cite{pamfil2020dynotears} which are graphical representations that can depict causal relationships between features and decisions over time (see Figure \ref{fig:cgraph} for an example). The deployment of a decision-making model is simulated as soft interventions on the graph so that the influence of feedback can be inferred as the interventional distribution. Thus, group disparity produced by the decision-making model at any time step $T$ can be considered as the causal effect of the sensitive attribute on the features at time $T$. Meanwhile, sensitive attribute unconsciousness can be formulated as a local requirement that restricts the direct causal effect of the sensitive attribute on the decision at each time step. As a result, we show that our goal can be formulated as the trade-off between two conflicting causal effect objectives.

To strike a balance for the trade-off, we leverage deep generative models to devise a regularized learning problem. Specifically, given the observed time series within the time range $[1,l]$, we train a deep generative model to generate the interventional distribution for the range $[1,T]$ as well as the observational distribution if $T>l$. Then, we integrate the deep generative model and decision-making model into a collaborative training framework so that the predicted data could be used as reliable data for training the decision model. We propose a three-phase training framework. In Phase 1, given the training time series, we first train a base decision model without considering any fairness requirements. In Phase 2, we train a recurrent conditional generative adversarial network (RCGAN) inspired by \cite{esteban2017real} to fit the training time series in order to generate high-fidelity time series. Finally, in Phase 3, we train a fair decision model on the generated time series within the time range $[1,T]$ by considering both long-term fairness at time $T$ as well as the local fairness requirement at each time step.
The optimization problem is treated as performative risk minimization and solved by using the repeated gradient descent algorithm \cite{perdomo2020performative}.

To derive the quantitative long-term fairness metric, we consider the interventional distribution of features at time step $T$ and adopt the 1-Wasserstein distance between the interventional distributions of different demographic groups as the measure of the group disparity. We substantiate our metric by demonstrating that a small distance enables reconciliation to occur between DP and EO for any decision models that are unconscious of sensitive attributes, provided the necessary conditions from \cite{kim2020fact} are met, and hence implying systemic equality and group parity.

Our experiments demonstrate that the proposed framework can effectively balance various fairness requirements and utility, while methods based on traditional fairness notions cannot effectively achieve long-term fairness.

\section{Background}
Throughout this paper, single variables and their values are denoted by the uppercase letter $X$ and the lowercase letter $x$. Corresponding bold letters $\mathbf{X}$ and $\mathbf{x}$ denote the sets of variables and their values respectively. We utilize Pearl's structural causal model (SCM) and causal graph \cite{pearl2009causality} for defining the long-term fairness metric and designing the architecture of the deep generative model. For a gentle introduction to SCM please refer to \cite{pearl2010causal}. In this paper, we assume the \textit{Markovian SCM} such that the exogenous variables are mutually independent.

Causal inference in the SCM is facilitated with the interventions \cite{pearl2009causality}. The hard intervention forces some variables to take certain constant. The soft intervention, on the other hand, forces some variables to take certain functional relationship in responding to some other
variables \cite{correa2020calculus}. Symbolicly, the soft intervention that substitutes equation $X = f_X(\mathbf{Pa}_X, \mathbf{U}_X)$ with a new equation $X = g(\mathbf{Z})$ is denoted as $\sigma_{X = g(\mathbf{Z})}$. The distribution of another variable $Y$ after performing the soft intervention is denoted as $P(Y(\sigma_{X = g(\mathbf{Z})}))$.

\section{Formulating Long-term Fairness}
\subsection{Problem Setting}
We start by modeling the system dynamics and formulating a long-term fairness metric that permits continuous optimization.
To ease the representation, we assume a binary sensitive attribute for indicating different demographic groups denoted by $S \in \mathcal{S} = \{s^+, s^-\}$, as well as a binary decision denoted by $Y \in \mathcal{Y} = \{y^+, y^-\}$. The profile features other than the sensitive feature are denoted by $\mathbf{X}\in \mathcal{X}$. In a sequential decision-making system, if a feature is time-dependent, it means that its value may change from one time step to another. We assume that $\X$ and $Y$ are time-dependent and use the superscript to denote their variants at different time steps, leading to $\X^t$ and $Y^t$. Many sensitive attributes like gender and race are naturally time-independent. 
Some sensitive attributes may change over time, but the relative order of individuals in the data does not change, like age. Thus, we treat $S$ as being time-independent in this paper.

Suppose that we have access to a time series $\mathcal{D}$ = \{$(S, \mathbf{X}^t, Y^t)\}_{t=1}^{l}$. 
We assume an SCM for describing the data generation mechanism and leverage a temporal causal graph for describing the causal relation among $S,\X^t,Y^t$ in the SCM. 
We make the \textit{stationarity assumption} such that data distribution may shift over time but the data generation mechanism behind it does not change. 
Figure \ref{fig:cgraph} gives an example which shows that at each time step the decision $Y^{t}$ is made based on the value of $\X^{t}$ and $S$. Meanwhile, the value of $\X^t$ is affected by the values of $\X^{t-1}$, $Y^{t-1}$ and $S$. We will use this graph as a running example throughout the remaining of this paper. In practice, the temporal causal graph can be obtained from the domain knowledge or learned from data using structure learning algorithms (e.g., \cite{vowels2021d,runge2020discovering,pamfil2020dynotears}). 
Our goal is to learn a decision model $h_{\theta}: \mathcal{S} \times \mathcal{X} \mapsto \mathcal{Y}$ such that when deployed at every time step, certain fairness requirements can be achieved.

\begin{figure}[t]
	\centering
	\begin{tikzpicture}[scale=1.]
	\definecolor{blue}{rgb}{0.0, 0.0, 0.0}
	\definecolor{red}{rgb}{0.9, 0.17, 0.31}
	\definecolor{green}{rgb}{0.0, 0.5, 0.0}
	
	\tikzstyle{vertex} = [thick, circle, draw, inner sep=0pt, minimum size = 6mm]
	\tikzstyle{empty} = [thick, circle, inner sep=0pt, minimum size = 4mm]
	\tikzstyle{rect} = [thick, rectangle, draw, inner sep=0pt, minimum size = 5mm]
	\tikzstyle{remp} = [thick, rectangle, inner sep=0pt, minimum size = 3mm]
	\tikzstyle{edge} = [thick, -stealth, blue]
	
	\node[rect] at (0, 3.2) (S) {\tiny{$S$}};
	
	\node[vertex] at (0.0,2) (X0) {\tiny{$X^1$}};
	\node[vertex] at (1.5,2) (X1) {\tiny{$X^2$}};
	\node[vertex] at (3.0,2) (X2) {\tiny{$X^3$}};
	\node[empty]  at (4.2,2) (X3) {$...$};
	\node[vertex] at (5.4,2) (X*) {\tiny{$X^T$}};
	
	\node[vertex] at (0.0, 0.8) (Y0) {\tiny{$Y^1$}};
	\node[vertex] at (1.5, 0.8) (Y1) {\tiny{$Y^2$}};
	\node[vertex] at (3.0, 0.8) (Y2) {\tiny{$Y^3$}};
	\node[empty]  at (4.2, 0.8) (Y3) {$...$};
	\node[vertex] at (5.4, 0.8) (Y*) {\tiny{$Y^{T}$}};
	
    \draw[edge, ] (S) to (X0);
    \draw[edge, ] (S) .. controls (0.5, 3.2) and (1.2, 3.) .. (X1);
    \draw[edge, ] (S) .. controls (1.5, 3.2) and (2.2, 3.) .. (X2);
    \draw[edge, ] (S) .. controls (2.5, 3.2) and (4.2, 3.) .. (X*);
    
    \draw[edge, ] (X0) to (X1);
    \draw[edge, ] (X1) to (X2);
    \draw[edge, ] (X2) to (X3);
    \draw[edge, ] (X3) to (X*);
    
    \draw[edge, ] (X0) to (Y0);
    \draw[edge, ] (X1) to (Y1);
    \draw[edge, ] (X2) to (Y2);
    \draw[edge, ] (X*) to (Y*);
    
    \draw[edge, ] (Y0) to (X1);
    \draw[edge, ] (Y1) to (X2);
    \draw[edge, ] (Y2) to (4.0, 1.8);
    \draw[edge, ] (4.5, 1.2) to (X*);
    
    \draw[edge] (S) .. controls (-0.8, 2.8) and (-0.8, 1.4) .. (Y0);
    \draw[edge] (S) to (Y1);
    \draw[edge] (S) .. controls (1.3, 3.2) and (1.8, 2.8) .. (Y2);
    \draw[edge] (S) .. controls (2.5, 3.2) and (4.2, 3.) .. (Y*);

	\end{tikzpicture}
	\caption{A temporal causal graph for sequential decision making.}
	\label{fig:cgraph}
\end{figure}
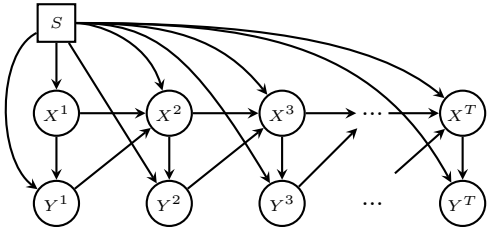

To illustrate our problem setting in a real-world scenario, consider an example of a bank loan system. When people apply for bank loans, their personal information (e.g., race, job, assets, credit score, etc.) is used by the bank's decision model to decide whether to grant the loans. Except for race, which is a sensitive feature $S$, other profile features $\X^{t}$ represent an applicant's qualification at time step $t$. The bank's decision $Y^{t}$ can have impacts on the applicants' profile features in $\textbf{X}^{t+1}$ such as the credit score, which in turn affect the outcomes of their subsequent loans.

\subsection{Long-term Fairness Metric}
To formulate long-term fairness under the decision model deployment, \cite{hu2022achieving} has proposed to mathematically simulate the model deployment by soft interventions on $Y$ at all time steps. That is, given a decision model $h_{\theta}$, it replaces the original structural equation of $Y$ in the SCM. We denote the soft intervention by $\sigma_{Y^{t}=h_{\theta}(S,\X^{t})}$ and abbreviate it as $\sigma_{\theta}$.
Then, the influence of the model deployment on feature $\X^{t}$ can be described by its interventional distribution, denoted by $P(\X^{t}(\sigma_{\theta}))$. With this formulation, we treat group disparity at time step $T$ as the causal effect of $S$ on $\mathbf{X}^T$, i.e., group parity is achieved when there is no such causal effect and the interventional distribution of $\X^{T}$ will be equal across demographic groups $\{s^+, s^-\}$. 

It is worth noting that, the causal effect of $S$ on $\X^T$ is transmitted through multiple causal paths. These paths can be categorized into two sets: those that intersect with decision nodes on the graph (e.g., $S\rightarrow \X^1 \rightarrow Y^1 \rightarrow \X^2 \rightarrow \cdots$), and those that bypass these nodes (e.g., $S\rightarrow \X^1 \rightarrow \X^2 \rightarrow \cdots$). The causal effect transmitted through the former set is influenced by updates to the decision model, whereas the effect via the latter remains unchanged. Hence, eliminating the causal effect of $S$ on $\X^T$ via updating the decision model means learning the decision model $h_{\theta}$ such that the causal effects transmitted through the two sets of paths are cancelled out. Consequently, we can formulate the problem of mitigating group disparity as a learning problem. 

We note that the causal effect transmitted through $h_{\theta}$ will be limited by the requirement of sensitive attribute unconsciousness that restricts the direct causal effect of $S$ on $h_{\theta}$ at each time step. The formulation of the metric of sensitive attribute unconsciousness will be detailed in the next section. As a result, long-term fairness may not always be achievable only through updating the decision model. In this paper, we aim to develop and conduct the best practices for mitigating group disparity using learning algorithms. The theoretical analysis of the feasibility of long-term fairness will be a valuable direction for future research.

Next, we present the quantitative metric of long-term fairness. We use the 1-Wasserstein distance to measure the difference between the two interventional distributions, defined as follows.
\begin{definition}\label{def:lf}
Given a sequential decision making system, a decision model $h_{\theta}: \mathcal{S} \times \mathcal{X} \mapsto \mathcal{Y}$, and a time step $T$, the metric for measuring the long-term fairness produced by deploying $h_{\theta}$ is given by 
\begin{equation}\label{eq:jlt}
    J_{1}^{T}(\theta) \triangleq W(P(\X^{T}(\sigma_{\theta})|S=s^+),P(\X^{T}(\sigma_{\theta})|S=s^-)),
\end{equation}
where $W$ is the 1-Wasserstein distance and $\sigma_{\theta}$ is the soft intervention.
\end{definition}

We substantiate our metric and the choice of the 1-Wasserstein distance with the following proposition.

\begin{proposition}\label{pro:wd}
Let $d$ be the 1-Wasserstein distance given in Definition \ref{def:lf}.
For any sensitive attribute-unconscious decision model $f: \mathcal{X} \mapsto \mathcal{A}$ that is Lipschitz continuous, its DP is bounded by $l_f \cdot d$ where $l_f$ is the Lipschitz constant of $f$. If we assume that the true label $Y$ is given by a decision model $g: \mathcal{X} \mapsto \mathcal{A}$ that is Lipschitz continuous and satisfies the equal base rate condition, then the EO of $f$ is bounded by $(l_f+l_g)/P(y)\cdot d$ where $l_g$ is the Lipschitz constant of $g$.
\end{proposition}

Please refer to the technical appendix for the proof. Proposition \ref{pro:wd} implies that when the 1-Wasserstein distance is minimized, both DP and EO are mitigated at time $T$ for any sensitive attribute-unconscious decision model that is Lipschitz continuous. As shown in \cite{kim2020fact}, the equal base rate condition is the necessary condition for DP and EO to be compatible. Proposition \ref{pro:wd} means that minimizing the 1-Wasserstein distance transforms the necessary condition into both a necessary and sufficient condition. This demonstrates a reconciliation between the previously incompatible fairness notions of DP and EO, thereby suggesting group parity and system equity.

In the experiments, to reduce computational complexity, we use the Sinkhorn distance \cite{cuturi2013sinkhorn} to approximate the 1-Wasserstein distance.
Note that although Proposition \ref{pro:wd} applies for sensitive attribute-unconscious decision models, we do allow the decision model $h_{\theta}$ to take the sensitive feature as input. Meanwhile, we impose local fairness constraints to restrict the direct causal effect of $S$ on the decision, as shall be shown in the next section.

\section{Learning Framework for Long-term Fairness}
In this section, we introduce the learning framework for achieving long-term fairness.

\subsection{Problem Formulation}

We start by formulating the objective function for the decision model $h_{\theta}$. In addition to long-term fairness outlined in Definition \ref{def:lf}, additional factors must also be taken into account during optimization. The first factor is the utility of the decision model to ensure good prediction performance. We adopt the traditional definitions of the loss function given as follows. In practice, loss functions such as cross-entropy loss can be used to penalize inaccurate predictions.

\begin{definition}
Given a time series $\mathcal{D}$ = $\{(S, \mathbf{X}^t, Y^t)\}_{t=1}^{l}$, the loss for the decision model $h_{\theta}$ is given by
\begin{equation}\label{eq:ju}
    J_{2}(\theta) \triangleq \frac{1}{l} \sum_{t=1}^{l} \mathbb{E}[\mathcal{L}(h_{\theta}(S,\X^{t}),Y^{t})],
\end{equation}
where $\mathcal{L}$ is any loss function.
\end{definition}

The second factor is the local fairness constraint for ensuring sensitive attribute-unconsciousness as mentioned above. In this paper, we adopt direct discrimination proposed in \cite{zhang2016causal} as the metric for formulating the local fairness constraint, as defined below.
\begin{definition}
The local fairness constraint for each time step $t\in [1,T]$ is given by direct discrimination at $t$, i.e.,
\begin{equation}\label{eq:jst}
    \begin{split}
    J_{3}^{t}(\theta) \; \triangleq \; & | \mathbb{E}[h_{\theta}(S=s^+,\X^{t}(\sigma_{\theta}))|S=s^-] - \\ &\mathbb{E}[h_{\theta}(S=s^-,\X^{t}(\sigma_{\theta}))|S=s^-] |.
    \end{split}
\end{equation}
\end{definition}

We note the trade-off between the three factors. The local fairness and utility trade-off has been studied in previous works \cite{bakker2019fairness,corbett2017algorithmic}. What we are more interested is the trade-off between long-term and local fairness. As discussed, mitigating group disparity requires a non-zero causal effect to be transmitted through decision nodes so that the causal effects transmitted through the two sets of paths can be canceled out. However, sensitive attribute unconsciousness requires eliminating the direct causal effect to be transmitted through decision nodes. As a concrete example, making loan decisions exactly according
to the credit score is reasonable in terms of local fairness, but might not help in narrowing the gap in credit scores between advantaged and disadvantaged groups. On the other hand, reducing the gap by favoring the
disadvantaged group can raise the potential issue of reverse discrimination, which may violate the local fairness constraint.

In this paper, rather than delving into a theoretical analysis of the trade-off between long-term and local fairness, we utilize the learning algorithm to achieve a balance between them. We formulate a regularized learning problem, leading to the problem formulation below.

\begin{formulation}
The problem of long-term fair decision-making is to solve the optimization problem:
\begin{equation}\label{eq:obj}
    \min_{\theta} \mathcal{L}(\theta)= \lambda_{1}J_{1}^{T}(\theta) + \lambda_{2}J_{2}(\theta) + \frac{\lambda_{3}}{T}\sum_{t=1}^{T} J_{3}^{t}(\theta),
\end{equation}
where $\lambda_{1}$, $\lambda_{2}$ and $\lambda_{3}$ are weight parameters.
\end{formulation}

\subsection{Overview of Learning Framework}
There are two main challenges in solving Problem Formulation 1. First, as shown in Eqs.~\eqref{eq:jlt} and \eqref{eq:jst}, the computation of $J_{1}^{T}(\theta)$ and $J_{3}^{t}(\theta)$ are based on the interventional variants of features $\X^{t}(\sigma_{\theta})$ whose values in turn depend on the model parameter $\theta$. Second, if $T>l$, then $J_{1}^{T}(\theta)$ and $\sum_{t=1}^{T} J_{3}^{t}(\theta)$ will be computed on time steps that are beyond the range of the training data.
Thus, Problem Formulation 1 cannot be solved by traditional machine learning algorithms.

We propose a novel three-phase framework based on causal inference techniques and deep generative networks. The core idea is to use a deep generative network to simulate an SCM for generating both observational and interventional distributions. Our method rests on two theoretical foundations: 1) \cite{kocaoglu2018causalgan} shows that if the structure of a generative network is arranged to reflect the causal structure, then it can be trained with the observational data such that it will agree with the same SCM in terms of any identifiable interventional distributions; 2) \cite{kocaoglu2019characterization} shows that the interventional distribution produced by any soft intervention is identifiable in a Markovian SCM.

Inspired by these prior results, we design the architecture of the deep generative network following the causal structure. We illustrate our framework using the example shown in Figure \ref{fig:cgraph}. In this example, the causal structure at each time step can be mathematically described by two structural equations of the SCM:
\begin{align}
     Y^t &=   f_Y\left(S, \mathbf{X}^t, U_{Y} \right) \label{eq:fy},\\ 
    \mathbf{X}^t &=  f_\mathbf{X}\left(S, \mathbf{X}^{t-1}, Y^{t-1}, U_\mathbf{X}\right),
    \label{eq:fx}
\end{align}
where $U_Y,U_{\mathbf{X}}$ are exogenous variables.
Note that due to the stationarity assumption, $f_Y$ and $f_{\mathbf{X}}$ are time-independent.
According to the principle of independent mechanisms \cite{peters2017elements}, these two structural equations can be learned independently without affecting each other.
Motivated by this principle, we propose a three-phase framework. In Phase 1, we train a classifier $h_{\omega}$ on the training time series $\mathcal{D}$ to approximate $f_Y$ and make decisions for each time step. In Phase 2, we train a recurrent conditional generative adversarial network (RCGAN) \cite{esteban2017real} on the same training time series $\mathcal{D}$ using adversarial training. The generator of the RCGAN uses $h_{\omega}$ obtained in Phase 1 for generating decisions. Finally, in Phase 3, we replace $h_{\omega}$ with the decision model $h_{\theta}$ and train it on the data generated from the RCGAN using the objective function Eq.~\eqref{eq:obj}. The overview of the framework is shown in Figure \ref{fig:process} and the pseudo-code is shown in Algorithm \ref{algo}. Next, we describe the details of each phase.

\begin{figure}[t]
    \centering
    \includegraphics[width=0.47\textwidth]{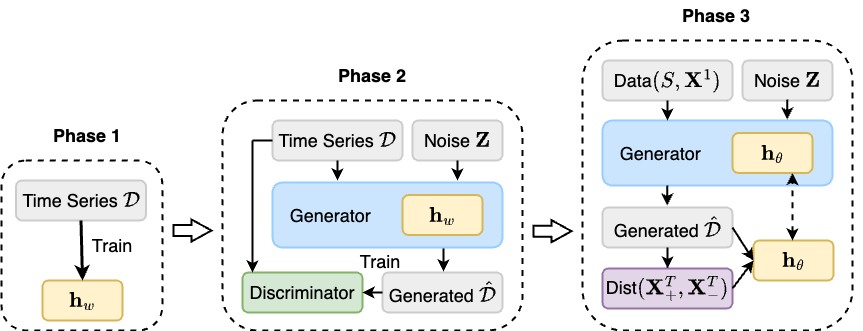}
    \caption{The overview of the proposed framework. Solid arrows represent input, and the dashed arrow represents parameter sharing. For Phase 3 only one generator is shown.}
    \label{fig:process}
\end{figure}

\IncMargin{1em}
\begin{algorithm}[t]\small
\SetKwInOut{Input}{Input}
\SetKwInOut{Output}{Output}
\SetKwRepeat{Repeat}{repeat}{until}
\SetKw{Return}{return}
	\caption{\textbf{DeepLF}}
    \label{algo} 
	\Input{Dataset $\mathcal{D} = \{(S, \mathbf{X}^t, Y^t)\}_{t=1}^{l}$, time-lagged causal graph $\mathcal{G}$, parameters $\lambda_1$, $\lambda_2$ and $\lambda_3$}
	\Output{The fair model $h_{\theta}$}
	\BlankLine 
	
	Train a classifier $h_{\omega}$ by minimizing Eq.~(8) on $\mathcal{D}$\; 
	\Repeat {\rm{convergence}} {
	    Update the discriminator $D_{\phi}$ according to Eq.~(13)\;
	    Update the generator $G_{\psi,\omega}$ with the classifier $h_{\omega}$ as one of its components according to Eq.~(14)\;
	}
	$i\leftarrow 0$\;
	Initialize $h_{\theta_{0}}$ according to $h_{\omega}$\;
	\Repeat{\rm{convergence}}{
	    Generate time series using generator $G_{\psi,\omega}$\;
	    Compute $\nabla_{\theta}\mathcal{L}_{l}(\theta)$ according to Eq.~(5) using the generated data\;
	    $\theta_{i+1}\leftarrow \theta_i - \eta_i \nabla_{\theta}\mathcal{L}_{l}(\theta_i)$, $i\leftarrow i+1$\;
	}
    \Return $h_{\theta_{i}}$;
 	 	  
\end{algorithm}
\DecMargin{1em}

\subsection{Phase 1: Train a Decision Classifier}
The objective of this phase is to learn a classifier $h_{\omega}$ from the training time series to approximate the mechanism $f_Y$ in Eq.~\eqref{eq:fy} for making decisions, i.e., $\hat{Y}^{t}=h_{\omega}(S,\X^{t})$. In this phase, we train the classifier by maximizing the accuracy without considering any fairness requirement. 
Since $f_Y$ does not change with time, we aggregate the losses at all time steps in the loss function. Specifically, if the cross-entropy loss is used, then the loss function for training $h_{\omega}$ on time series $\mathcal{D}$ is given by

\begin{equation}
    \min_{\omega} \mathcal{L}_{c}(\omega) = -\frac{1}{l} \sum_{t=1}^{l}\mathbb{E} \left[Y^t \log h_{\omega}(S, X^{t}) \right].
\end{equation}

\subsection{Phase 2: Train an RCGAN}
\label{sec:gan}
We then train an RCGAN to simulate the system dynamics and generate values of features $\X$ by taking the predictions of $h_{\omega}$ as the input. Our purpose is to generate both observational and intervention distributions of $\X$, so it is important to design the architecture of the RCGAN following the causal structure.

\begin{figure}[t]
    \centering
    \includegraphics[width=0.45\textwidth]{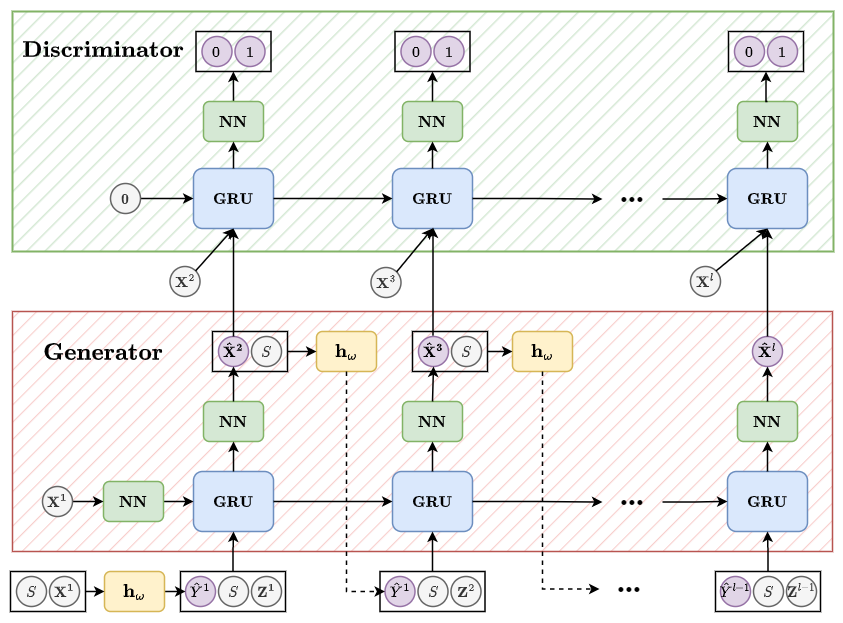}
    \caption{The architecture of the RCGAN.}
    \label{fig:gan}
\end{figure}

The architecture of the RCGAN is illustrated in Figure \ref{fig:gan}, which consists of one generator and one discriminator. We use the gated recurrent unit (GRU) \cite{cho2014properties} as the core structure of the generator and discriminator. For the generator, it takes the sensitive feature $S$, a set of noise vectors $\Z$, as well as the features at the first time step $\X^{1}$ as the input.
The hidden state is then initialized by a non-linear transformation (e.g., a multi-layer perceptron) of $\X^{1}$ as:
\begin{equation}
    \h^1 = \text{MLP}\left(\X^1\right).
\end{equation}
Then, for each time step $t$, we concatenate the noise vector $\Z^{t-1}$ with the conditional information $\hat{Y}^{t-1}$ and $S$ as the input to each GRU. After the calculation of GRU, 
we again use a non-linear transformation to convert the hidden states $\h^t$ to predicted $\hat{\X}^{t+1}$. These three steps are shown by the three equations below.
\begin{align}
    &\I^{t-1} \gets \left[\hat{Y}^{t-1}, S, \Z^{t-1}\right], \\
    &\h^t = \text{GRU}\left(\I^{t-1}, \h^{t-1}\right), \\ 
    &\hat{\X}^{t+1} = \text{MLP}\left(\h^t\right).
\end{align}

We also use GRUs for the discriminator to distinguish between generated data and real data.
For the generated data, the discriminator attempts to predict label 0 for each time step, and vice versa, for the real data, the discriminator attempts to predict label 1 for each time step.

Finally, for training the generator and discriminator, in addition to the objective of the original GAN for minimizing the likelihood of generated data given by the discriminator, the maximum mean discrepancy (MMD) \cite{gretton2006kernel} between original data and generated data is also explicitly minimized. The MMD brings two distributions together by comparing their statistics. As a result, the loss functions of the discriminator and the generator are shown below, which are optimized alternatively.
\begin{equation}
    \label{eq:discriminator}
    \begin{split}
        \max_{\phi} \mathcal{L}_d(\phi) = & \mathbb{E}_{\X} [\log(D_{\phi}(\X))] + \\ 
        & \mathbb{E}_{\Z} [\log(1-D_{\phi}(G_{\psi,\omega}(S,\Z,\X^{1})))],
    \end{split}
\end{equation}

\begin{equation}
    \label{eq:generator}
    \begin{split}
        \min_{\psi} \mathcal{L}_g(\psi) = &\mathbb{E}_{\Z} [\log(1-D_{\phi}(G_{\psi,\omega}(S,\Z,\X^{1})))] + \\
        & \gamma \text{MMD}(\X, G_{\psi,\omega}(S,\Z,\X^{1})),
    \end{split}
\end{equation}
where $D_{\phi}$ represents the discriminator, $G_{\psi,\omega}$ represents the generator that uses $h_{\omega}$ as the classifier, and $\gamma$ controls the strength of the regularization.

\subsection{Phase 3: Train a Long-term Fair Decision Model}
In the last phase, we train a decision model $h_{\theta}$ on the data generated by the RCGAN using the objective function Eq.~\eqref{eq:obj}. We use the generator obtained in Phase 2 as well as a variant of this generator where $h_{\omega}$ is replaced with $h_{\theta}$. The former is for generating the observational distribution and the latter is for generating interventional distribution. Specifically, we first directly apply the generator $G_{\psi,\omega}$ obtained in Phase 2 to generate data for time steps from 1 to $l$ which are used to compute $J_2(\theta)$ in Eq.~\eqref{eq:ju}. Then, we perform soft intervention $\sigma_{\theta}$ by replacing $h_{\omega}$ with $h_{\theta}$ to obtain a variant generator $G_{\psi,\theta}$ which is used to generate data for time steps from 1 to $T$ for computing $J_1^T(\theta)$ in Eq.~\eqref{eq:jlt} and $J_3^t(\theta)$ in Eq.~\eqref{eq:jst}. In other words, we use $G_{\psi,\theta}$ to generate the interventional data $\X^{t}(\sigma_{\theta})$. Finally, the loss $\mathcal{L}(\theta)$ is computed and $h_{\theta}$ is updated accordingly. Note that the RCGAN trained in Phase 2 will not be updated in this phase.

It is important to note that when we use the RCGAN to generate data samples for computing $\mathcal{L}(\theta)$, those data samples are affected by $h_{\theta}$ as well, due to the fact that $h_{\theta}$ is trained on the interventional distribution after performing soft intervention $\sigma_{\theta}$. This optimization problem is different from the traditional empirical risk minimization and is called the performative risk minimization \cite{perdomo2020performative}. In our work, we adopt the repeated gradient descent algorithm (RGD) \cite{perdomo2020performative} which is an iterative training approach to address this problem. In the training process of Phase 3, we first initialize $h_{\theta}$ according to $h_{\omega}$. Then, in each iteration, we use the current version of $h_{\theta}$ for generating data and computing the empirical loss, and $h_{\theta}$ is updated based on the empirical gradient $\nabla_{\theta}\mathcal{L}(\theta)$. After that, we replace $h_{\theta}$ with its updated version and conduct another iteration of training. This process is repeated until the parameters of $h_{\theta}$ converge.

\section{Experiments}
In this section, we conduct empirical evaluations of our method\footnote{The code and hyperparameter settings are available online: \url{https://github.com/yaoweihu/Generative-Models-for-Fairness}.}. We refer to our method as deep long-term fair decision making (\textbf{DeepLF}). We use a multi-layer perceptron for both $h_{\omega}$ and $h_{\theta}$ in our method.

\subsection{Datasets}
Many commonly used datasets in fair machine learning \cite{le2022survey} are not for dynamic fairness research. In \cite{ding2021retiring}, the authors construct a dataset that spans multiple years and allows researchers to study temporal shifts in the distribution level. However, our study requires the longitudinal data that track each instance over time, other than multiple datasets with temporal distribution shifts.  
Thus, following \cite{hu2022achieving}, we generate synthetic and semi-synthetic time series datasets as follows.

{\bf\noindent Synthetic Dataset.} We generate the synthetic time series dataset based on the temporal causal graph shown in Figure \ref{fig:cgraph}.
Each sample at each time step in the time series includes a sensitive feature $S$, profile features $\textbf{X}^t$ and a decision $Y^t$. The samples at the initial time step $\textbf{X}^1,Y^{1}$ are generated by calling the data generation function (i.e., make\_classification) of scikit-learn package. Then, we cluster the generated samples into two groups and assign $S$ to each sample according to the cluster it belongs to. To generate the data samples in the remaining time steps, we design a procedure by simulating the bank loan system in the real world. Please refer to the technical appendix for the detailed generation rules, following which we generate a 10-step synthetic time series dataset with 10000 instances where $\textbf{X}^t$ is a 6-dimensional vector. We refer to this dataset SimLoan.

{\bf\noindent Semi-Synthetic Dataset.}
We also generate semi-synthetic data by leveraging the real-world Taiwan dataset \cite{yeh2009comparisons} as the initial data at $t=1$. A ground-truth classifier and similar generation rules of change are used to generate subsequent decisions $Y^1, ..., Y^l$ and profile features $\textbf{X}^2, ..., \textbf{X}^l$. There are 10000 instances in the initial data and they are randomly and equally sampled from groups by $S$ and $Y$ for balance. Like the SimLoan dataset, this dataset is also made up of 10 steps. We refer to this dataset Taiwan.

\subsection{Experimental Setting}
{\bf\noindent Baselines.} A multi-layer perceptron with the same number of layers as $h_{\omega}$ and $h_{\theta}$ that is trained on the training time series $\mathcal{D}$ without any fairness constraints is used as the first baseline, denoted as (\textbf{MLP}). Two common static fairness constraints, i.e., demographic parity and equal opportunity, are applied to the \textbf{MLP} model as fairness constraints respectively, referred to as \textbf{MLP-DP} and \textbf{MLP-EO}.  We also implement the method proposed in \cite{hu2022achieving} that formulates long-term fairness as path-specific effects and trains the model using repeated risk minimization, referred to as \textbf{LRLF}. 
It requires the true causal structure equations for training, so we provide it with the true data generation rules.
Implementing details are included in the appendix.

{\bf\noindent Evaluation.}
To evaluate the performance of models after deployment, the RCGAN trained in Phase 2 and the decision models that we evaluate are used together to generate interventional data on which the local and long-term fairness are computed. The long-term fairness is measured by Eq.~\eqref{eq:jlt} computed on the evaluated decision model; the local fairness is measured by direct discrimination Eq.~\eqref{eq:jst} at each time step;
and the accuracy of predictions is evaluated based on the ground-truth classifier $h_{\omega}$ at each time step.

\subsection{Results}

\begin{figure*}[t!]
  \centering
  \includegraphics[scale=0.335]{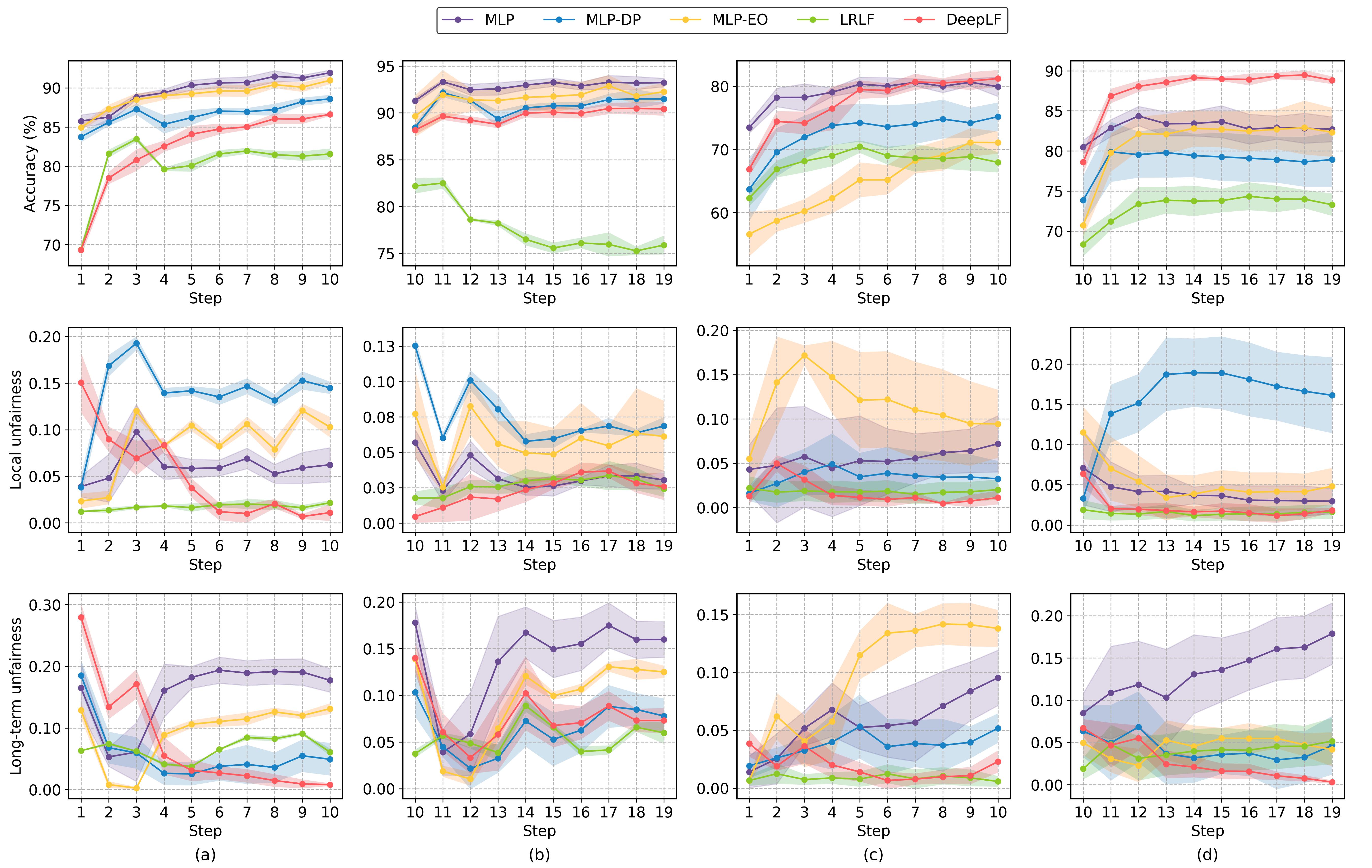}
  \caption{Accuracy ($\uparrow$), local and long-term unfairness ($\downarrow$) of different algorithms on SimLoan ((a) and (b)) and Taiwan ((c) and (d)) datasets. The decision models are trained on generated data within the time range $[1, 10]$. (a) and (c): Results of evaluation on generated data within time range $[1, 10]$. (b) and (d): Results of evaluation on generated data within the time range $[10, 19]$.}
  \label{fig:result}
\end{figure*}

\begin{figure}[!t]
    \centering
    
    \subfloat[SimLoan Dataset]{
    \begin{minipage}[c]{0.45\textwidth}
        \centering
        \includegraphics[scale=0.24]{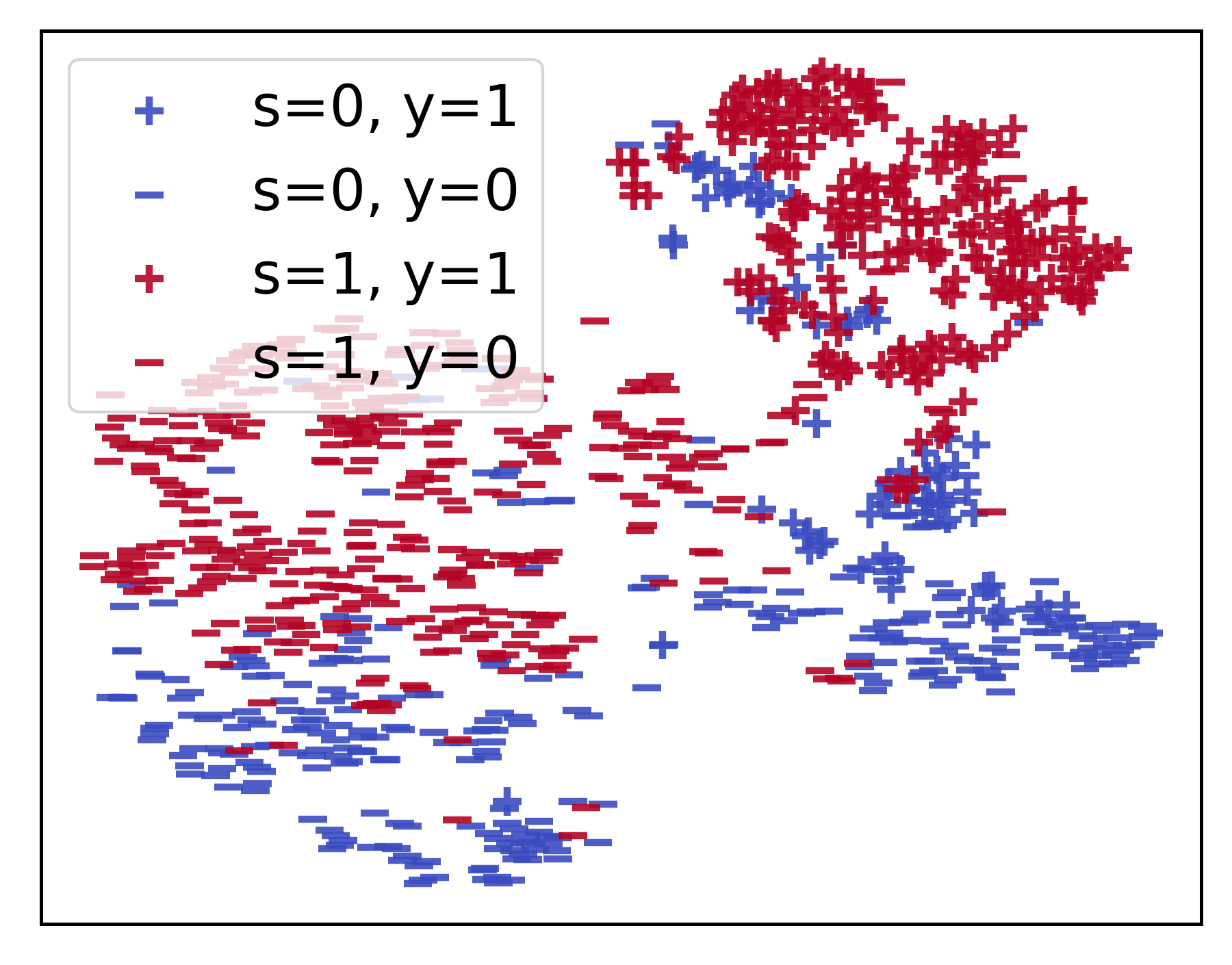}
        \hspace{0.01em}
        \includegraphics[scale=0.24]{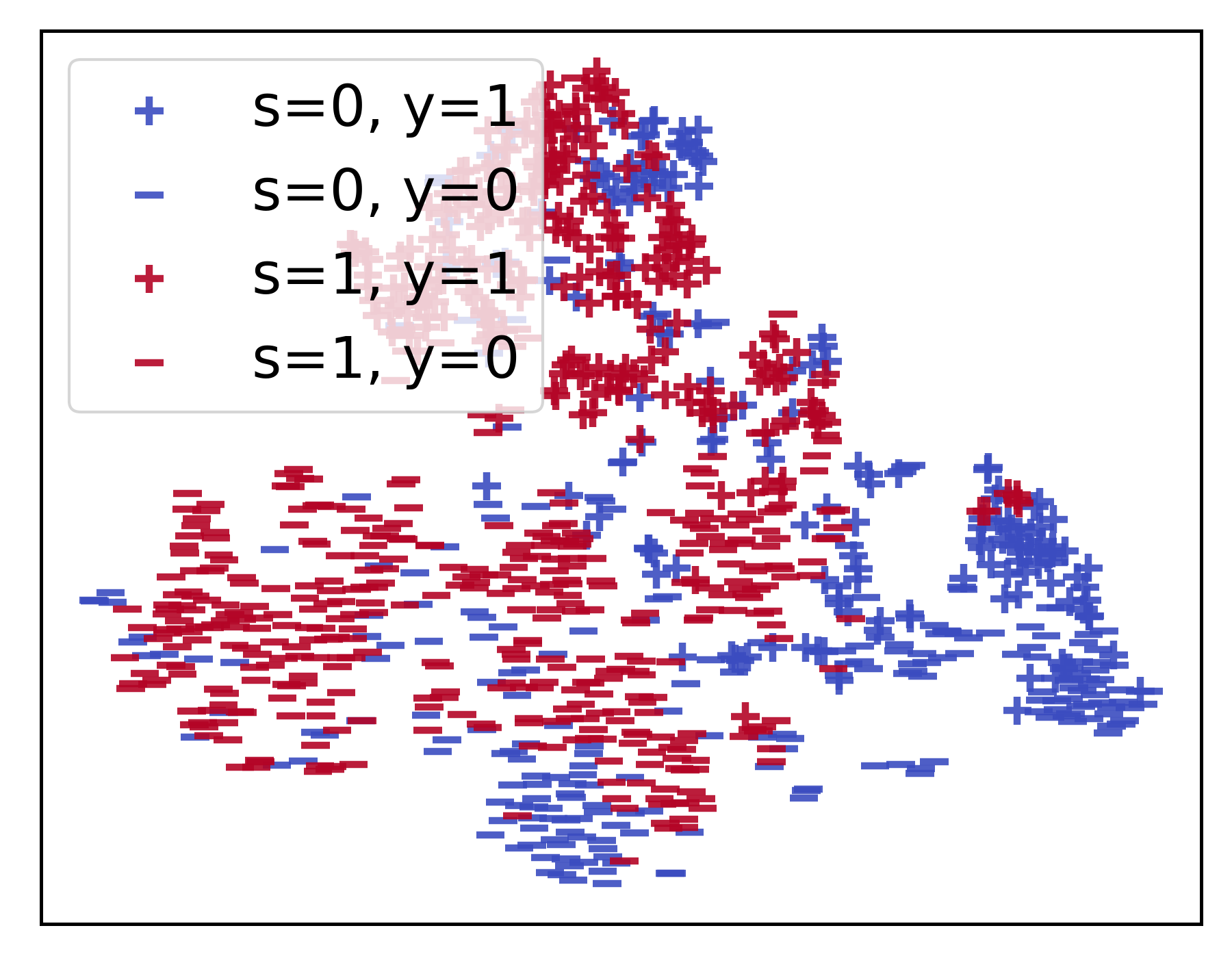}
    \end{minipage}
    }
    
    \subfloat[Taiwan Dataset]{
    \begin{minipage}[c]{0.45\textwidth}
        \centering
        \includegraphics[scale=0.24]{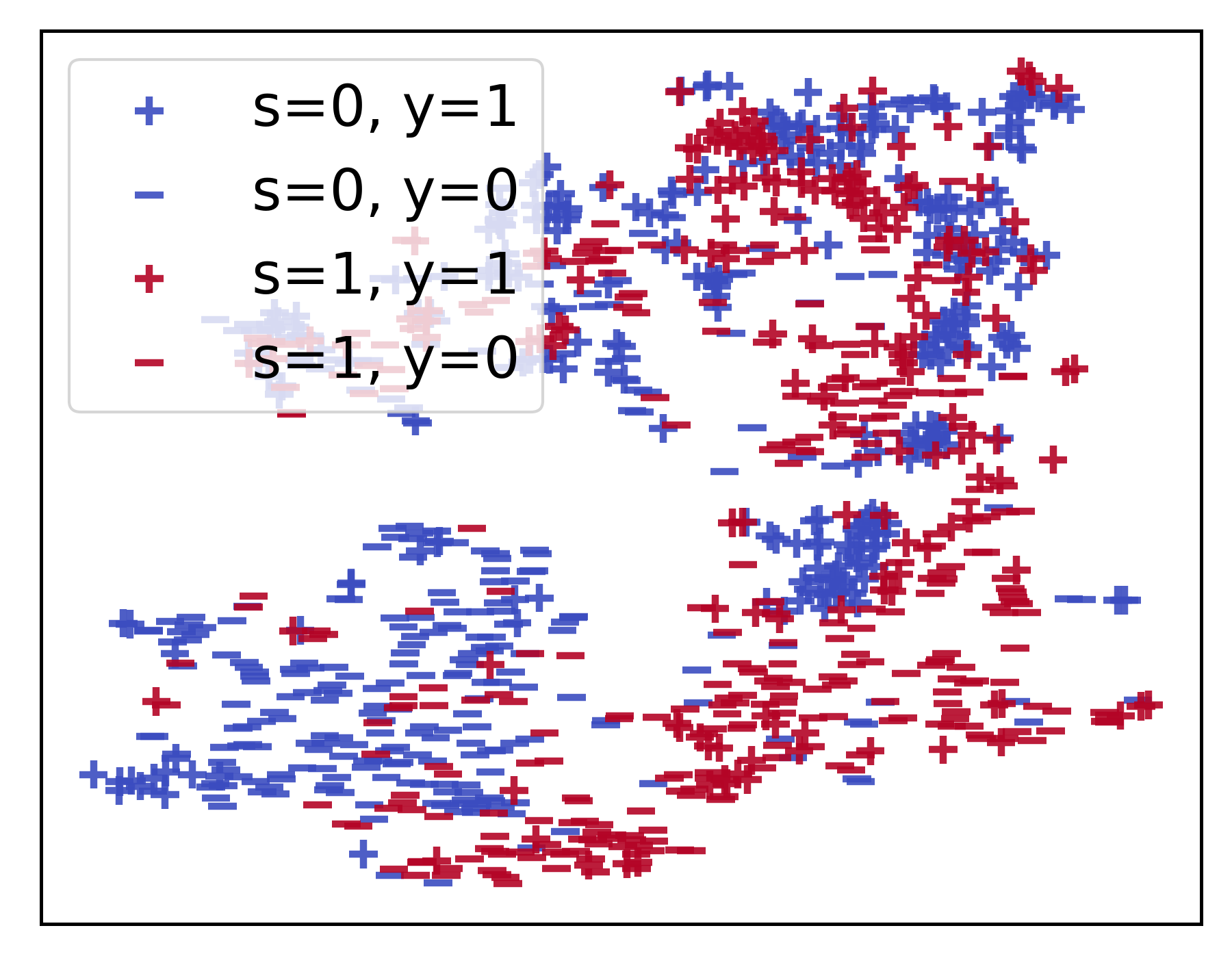}
        \hspace{0.01em}
        \includegraphics[scale=0.24]{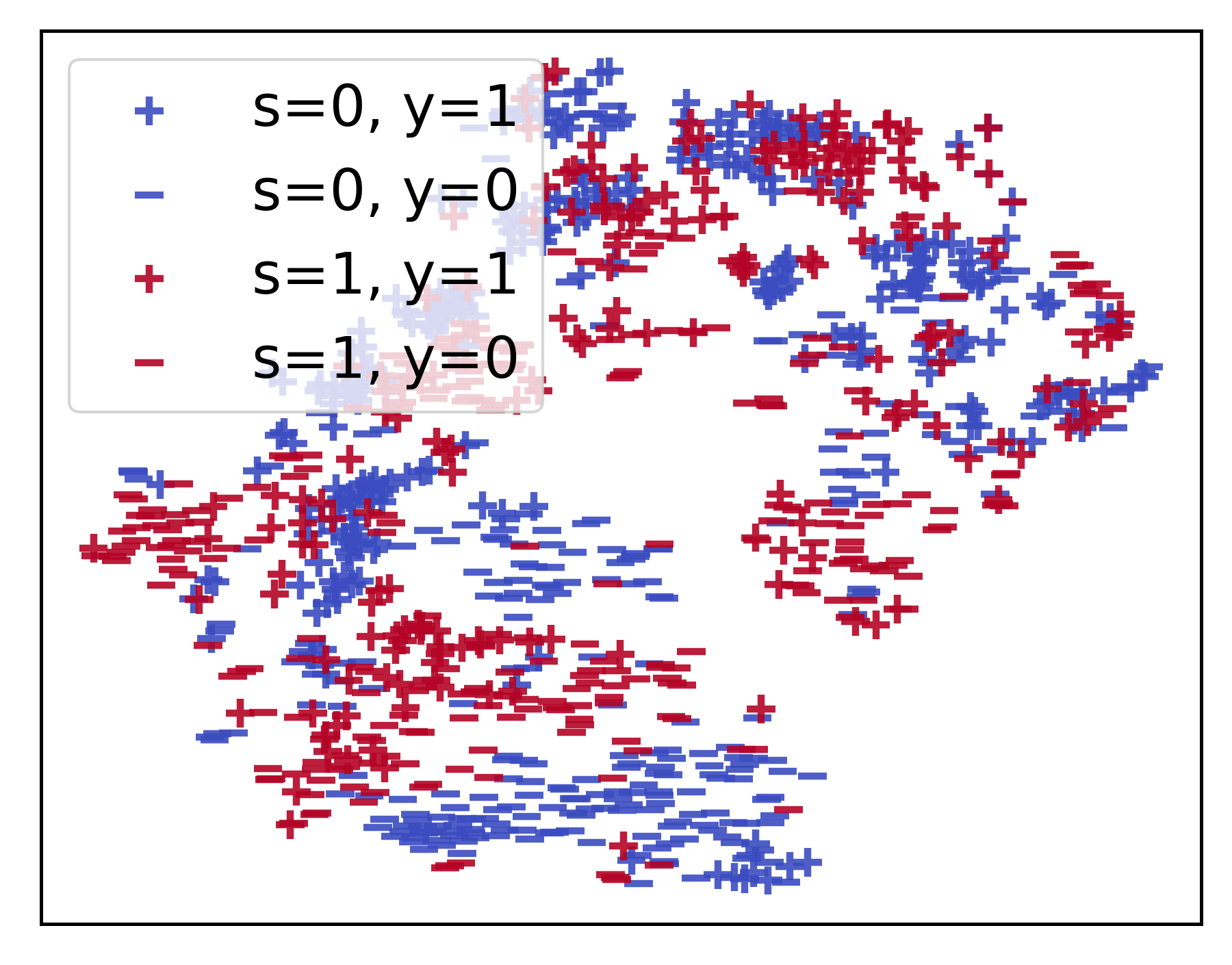}
    \end{minipage}
    }
    
    \caption{T-SNE of generated data distributions at time step $t=10$ produced by MLP (left) and DeepLF (right).} 
    \label{fig:step10}
\end{figure}

To evaluate the performance of our algorithm and baselines,  
we conduct experiments with two settings on both SimLoan and Taiwan datasets for 5 times and calculate their mean and std. In the first setting, the time step $T$ for achieving long-term fairness is set to $10$. We train the models on the 10-step training data (i.e., within the time range $[1,10]$) and evaluate the models on the 10-step generated datasets with $\textbf{X}^1$ as input (i.e., also within the time range $[1,10]$). The results of accuracy and unfairness of all algorithms on the two datasets are shown in Figures \ref{fig:result} (a) and (c). As can be seen, both the local and long-term fairness produced by \textbf{DeepLF} are comparable with or better than those of \textbf{LRLF}, and they markedly outperform other baselines. For \textbf{LRLF}, although it also produces relatively small local and long-term unfairness, it requires true causal structure equations for training which may not be available in practice. For the other three baselines, there is no clear decreasing trend in both local and long-term unfairness, although a relatively higher level of accuracy is achieved. The results demonstrate that our method strikes an effective balance between long-term fairness, local fairness, and utility, and it requires only the information from the historical data.

To illustrate the difference in the qualification distribution produced by different methods, 
we adopt T-SNE to visualize the distribution of $\X^{10}$ produced by \textbf{MLP} and \textbf{DeepLF}, as shown in Figure \ref{fig:step10}. It can be seen that compared with the distribution obtained by using the \textbf{MLP} as the decision model (left figures), the data samples of two groups ($s=0,1$) produced by \textbf{DeepLF} (right figures) are more evenly mixed together which implies a fairer qualification distribution.

In the second setting, the time step $T$ for achieving long-term fairness is set to $19$. We train the decision models on the same training data as in the first setting but evaluate the models on the 10-step generated data with $\textbf{X}^{10}$ as the input, i.e., the generated data within the time range $[10,19]$. The difference in the second setting is that we only modify the decision model that will be deployed in the future (i.e., starting from $t=10$). 
The results are shown in Figures \ref{fig:result} (c) and (d). In general, we observe similar results to the first setting where our algorithm outperforms all the baseline methods in achieving the best trade-off.

\section{Conclusions}
In this paper, we studied the problem of mitigating group disparity and achieving long-term fairness while limiting the use of sensitive attribute in decision-making. We proposed a data-driven method that requires only the information from the historical data. Leveraging the temporal causal graph, we formulated long-term fairness as the 1-Wasserstein distance between the interventional
distributions of different demographic groups. Then, we proposed a three-phase learning framework to achieve long-term fairness by training an RCGAN to predictively generate observational and interventional data and then training a classifier upon the generated data. Experiments on both synthetic and semi-synthetic data show that our method can achieve a more effective balance between long-term fairness, local fairness, and utility compared with methods based on traditional fairness notions.

\section*{Acknowledgments}
This work was supported in part by NSF 1910284, 1946391, and 2142725.


\begin{thebibliography}{35}
\providecommand{\natexlab}[1]{#1}

\bibitem[{har(2023)}]{harvard2023}
 2023.
\newblock Students for Fair Admissions v. Harvard.
\newblock \url{https://www.supremecourt.gov/opinions/22pdf/20-1199_hgdj.pdf}.

\bibitem[{Alves et~al.(2023)Alves, Bernier, Couceiro, Makhlouf, Palamidessi, and Zhioua}]{alves2023survey}
Alves, G.; Bernier, F.; Couceiro, M.; Makhlouf, K.; Palamidessi, C.; and Zhioua, S. 2023.
\newblock Survey on fairness notions and related tensions.
\newblock \emph{EURO Journal on Decision Processes}, 100033.

\bibitem[{Baker and Hawn(2021)}]{baker2021algorithmic}
Baker, R.~S.; and Hawn, A. 2021.
\newblock Algorithmic bias in education.
\newblock \emph{International Journal of Artificial Intelligence in Education}, 1--41.

\bibitem[{Bakker et~al.(2019)Bakker, Noriega-Campero, Tu, Sattigeri, Varshney, and Pentland}]{bakker2019fairness}
Bakker, M.~A.; Noriega-Campero, A.; Tu, D.~P.; Sattigeri, P.; Varshney, K.~R.; and Pentland, A. 2019.
\newblock On fairness in budget-constrained decision making.
\newblock In \emph{Proceedings of the KDD Workshop on Explainable Artificial Intelligence}.

\bibitem[{Berk et~al.(2021)Berk, Heidari, Jabbari, Kearns, and Roth}]{berk2021fairness}
Berk, R.; Heidari, H.; Jabbari, S.; Kearns, M.; and Roth, A. 2021.
\newblock Fairness in criminal justice risk assessments: The state of the art.
\newblock \emph{Sociological Methods \& Research}, 50(1): 3--44.

\bibitem[{Caton and Haas(2020)}]{caton2020fairness}
Caton, S.; and Haas, C. 2020.
\newblock Fairness in machine learning: A survey.
\newblock \emph{arXiv preprint arXiv:2010.04053}.

\bibitem[{Cho et~al.(2014)Cho, Van~Merri{\"e}nboer, Bahdanau, and Bengio}]{cho2014properties}
Cho, K.; Van~Merri{\"e}nboer, B.; Bahdanau, D.; and Bengio, Y. 2014.
\newblock On the properties of neural machine translation: Encoder-decoder approaches.
\newblock \emph{arXiv preprint arXiv:1409.1259}.

\bibitem[{Corbett-Davies et~al.(2017)Corbett-Davies, Pierson, Feller, Goel, and Huq}]{corbett2017algorithmic}
Corbett-Davies, S.; Pierson, E.; Feller, A.; Goel, S.; and Huq, A. 2017.
\newblock Algorithmic decision making and the cost of fairness.
\newblock In \emph{Proceedings of the 23rd acm sigkdd international conference on knowledge discovery and data mining}, 797--806.

\bibitem[{Correa and Bareinboim(2020)}]{correa2020calculus}
Correa, J.; and Bareinboim, E. 2020.
\newblock A calculus for stochastic interventions: Causal effect identification and surrogate experiments.
\newblock In \emph{Proceedings of the AAAI Conference on Artificial Intelligence}, volume~34, 10093--10100.

\bibitem[{Crosby, Iyer, and Sincharoen(2006)}]{crosby2006understanding}
Crosby, F.~J.; Iyer, A.; and Sincharoen, S. 2006.
\newblock Understanding affirmative action.
\newblock \emph{Annu. Rev. Psychol.}, 57: 585--611.

\bibitem[{Cuturi(2013)}]{cuturi2013sinkhorn}
Cuturi, M. 2013.
\newblock Sinkhorn distances: Lightspeed computation of optimal transport.
\newblock \emph{Advances in neural information processing systems}, 26.

\bibitem[{Ding et~al.(2021)Ding, Hardt, Miller, and Schmidt}]{ding2021retiring}
Ding, F.; Hardt, M.; Miller, J.; and Schmidt, L. 2021.
\newblock Retiring adult: New datasets for fair machine learning.
\newblock \emph{Advances in Neural Information Processing Systems}, 34: 6478--6490.

\bibitem[{Esteban, Hyland, and R{\"a}tsch(2017)}]{esteban2017real}
Esteban, C.; Hyland, S.~L.; and R{\"a}tsch, G. 2017.
\newblock Real-valued (medical) time series generation with recurrent conditional gans.
\newblock \emph{arXiv preprint arXiv:1706.02633}.

\bibitem[{Fischer and Massey(2007)}]{fischer2007effects}
Fischer, M.~J.; and Massey, D.~S. 2007.
\newblock The effects of affirmative action in higher education.
\newblock \emph{Social Science Research}, 36(2): 531--549.

\bibitem[{Gretton et~al.(2006)Gretton, Borgwardt, Rasch, Sch{\"o}lkopf, and Smola}]{gretton2006kernel}
Gretton, A.; Borgwardt, K.; Rasch, M.; Sch{\"o}lkopf, B.; and Smola, A. 2006.
\newblock A kernel method for the two-sample-problem.
\newblock \emph{Advances in neural information processing systems}, 19.

\bibitem[{Hu and Zhang(2022)}]{hu2022achieving}
Hu, Y.; and Zhang, L. 2022.
\newblock Achieving long-term fairness in sequential decision making.
\newblock In \emph{Proceedings of the AAAI Conference on Artificial Intelligence}, volume~36, 9549--9557.

\bibitem[{Kim, Chen, and Talwalkar(2020)}]{kim2020fact}
Kim, J.~S.; Chen, J.; and Talwalkar, A. 2020.
\newblock FACT: A diagnostic for group fairness trade-offs.
\newblock In \emph{International Conference on Machine Learning}, 5264--5274. PMLR.

\bibitem[{Kocaoglu et~al.(2019)Kocaoglu, Jaber, Shanmugam, and Bareinboim}]{kocaoglu2019characterization}
Kocaoglu, M.; Jaber, A.; Shanmugam, K.; and Bareinboim, E. 2019.
\newblock Characterization and learning of causal graphs with latent variables from soft interventions.
\newblock \emph{Advances in Neural Information Processing Systems}, 32.

\bibitem[{Kocaoglu et~al.(2018)Kocaoglu, Snyder, Dimakis, and Vishwanath}]{kocaoglu2018causalgan}
Kocaoglu, M.; Snyder, C.; Dimakis, A.~G.; and Vishwanath, S. 2018.
\newblock CausalGAN: Learning Causal Implicit Generative Models with Adversarial Training.
\newblock In \emph{International Conference on Learning Representations}.

\bibitem[{Le~Quy et~al.(2022)Le~Quy, Roy, Iosifidis, Zhang, and Ntoutsi}]{le2022survey}
Le~Quy, T.; Roy, A.; Iosifidis, V.; Zhang, W.; and Ntoutsi, E. 2022.
\newblock A survey on datasets for fairness-aware machine learning.
\newblock \emph{Wiley Interdisciplinary Reviews: Data Mining and Knowledge Discovery}, e1452.

\bibitem[{Lee and Floridi(2021)}]{lee2021algorithmic}
Lee, M. S.~A.; and Floridi, L. 2021.
\newblock Algorithmic fairness in mortgage lending: from absolute conditions to relational trade-offs.
\newblock \emph{Minds and Machines}, 31(1): 165--191.

\bibitem[{Liu et~al.(2018)Liu, Dean, Rolf, Simchowitz, and Hardt}]{liu2018delayed}
Liu, L.~T.; Dean, S.; Rolf, E.; Simchowitz, M.; and Hardt, M. 2018.
\newblock Delayed impact of fair machine learning.
\newblock In \emph{International Conference on Machine Learning}, 3150--3158. PMLR.

\bibitem[{Mehrabi et~al.(2021)Mehrabi, Morstatter, Saxena, Lerman, and Galstyan}]{mehrabi2021survey}
Mehrabi, N.; Morstatter, F.; Saxena, N.; Lerman, K.; and Galstyan, A. 2021.
\newblock A survey on bias and fairness in machine learning.
\newblock \emph{ACM Computing Surveys (CSUR)}, 54(6): 1--35.

\bibitem[{Pamfil et~al.(2020)Pamfil, Sriwattanaworachai, Desai, Pilgerstorfer, Georgatzis, Beaumont, and Aragam}]{pamfil2020dynotears}
Pamfil, R.; Sriwattanaworachai, N.; Desai, S.; Pilgerstorfer, P.; Georgatzis, K.; Beaumont, P.; and Aragam, B. 2020.
\newblock Dynotears: Structure learning from time-series data.
\newblock In \emph{International Conference on Artificial Intelligence and Statistics}, 1595--1605. PMLR.

\bibitem[{Pearl(2009)}]{pearl2009causality}
Pearl, J. 2009.
\newblock \emph{Causality}.
\newblock Cambridge university press.

\bibitem[{Pearl(2010)}]{pearl2010causal}
Pearl, J. 2010.
\newblock Causal inference.
\newblock \emph{NIPS Causality: Objectives and Assessment}, 39--58.

\bibitem[{Perdomo et~al.(2020)Perdomo, Zrnic, Mendler-D{\"u}nner, and Hardt}]{perdomo2020performative}
Perdomo, J.; Zrnic, T.; Mendler-D{\"u}nner, C.; and Hardt, M. 2020.
\newblock Performative prediction.
\newblock In \emph{International Conference on Machine Learning}, 7599--7609. PMLR.

\bibitem[{Peters, Janzing, and Sch{\"o}lkopf(2017)}]{peters2017elements}
Peters, J.; Janzing, D.; and Sch{\"o}lkopf, B. 2017.
\newblock \emph{Elements of causal inference: foundations and learning algorithms}.
\newblock The MIT Press.

\bibitem[{Runge(2020)}]{runge2020discovering}
Runge, J. 2020.
\newblock Discovering contemporaneous and lagged causal relations in autocorrelated nonlinear time series datasets.
\newblock In \emph{Conference on Uncertainty in Artificial Intelligence}, 1388--1397. PMLR.

\bibitem[{Schumann et~al.(2020)Schumann, Foster, Mattei, and Dickerson}]{schumann2020we}
Schumann, C.; Foster, J.; Mattei, N.; and Dickerson, J. 2020.
\newblock We need fairness and explainability in algorithmic hiring.
\newblock In \emph{International Conference on Autonomous Agents and Multi-Agent Systems (AAMAS)}.

\bibitem[{Tang, Zhang, and Zhang(2022)}]{tang2022and}
Tang, Z.; Zhang, J.; and Zhang, K. 2022.
\newblock What-Is and How-To for Fairness in Machine Learning: A Survey, Reflection, and Perspective.
\newblock \emph{arXiv preprint arXiv:2206.04101}.

\bibitem[{Vowels, Camgoz, and Bowden(2021)}]{vowels2021d}
Vowels, M.~J.; Camgoz, N.~C.; and Bowden, R. 2021.
\newblock D'ya like dags? a survey on structure learning and causal discovery.
\newblock \emph{arXiv preprint arXiv:2103.02582}.

\bibitem[{Yeh and Lien(2009)}]{yeh2009comparisons}
Yeh, I.-C.; and Lien, C.-h. 2009.
\newblock The comparisons of data mining techniques for the predictive accuracy of probability of default of credit card clients.
\newblock \emph{Expert systems with applications}, 36(2): 2473--2480.

\bibitem[{Zhang, Wu, and Wu(2016)}]{zhang2016causal}
Zhang, L.; Wu, Y.; and Wu, X. 2016.
\newblock A causal framework for discovering and removing direct and indirect discrimination.
\newblock \emph{arXiv preprint arXiv:1611.07509}.

\bibitem[{Zhang et~al.(2020)Zhang, Tu, Liu, Liu, Kjellstrom, Zhang, and Zhang}]{zhang2020fair}
Zhang, X.; Tu, R.; Liu, Y.; Liu, M.; Kjellstrom, H.; Zhang, K.; and Zhang, C. 2020.
\newblock How do fair decisions fare in long-term qualification?
\newblock \emph{Advances in Neural Information Processing Systems}, 33: 18457--18469.

\end{thebibliography}
\end{document}


\maketitle

\section{Related Work}
Fair machine learning in the past decade has been focused on static settings with one-shot decisions being made \cite{mehrabi2021survey,caton2020fairness}. 
In recent years, attention has been paid to dynamic
settings where sequential decisions are made over time. 
Some efforts have been devoted to a compound decision-making process called pipeline \cite{bower2017fair,dwork2018fairness}. In pipelines, individuals may drop out at any stage, and classification in subsequent stages depends on the remaining cohort of individuals. 
For instance, hiring is at least a two-stage model: deciding whom to be interviewed from the applicant pool and then deciding whom to be hired from the interview pool.
In addition to the pipeline, a more practical and challenging dynamic setting considers that decisions made in the past can reshape the data population and subsequently influence future decisions \cite{zhang2020fairness}.
In this setting, several studies have demonstrated the inadequacy of static fairness approaches in various application scenarios, including credit lending \cite{liu2018delayed}, college admission \cite{d2020fairness}, labor market \cite{hu2018short}. 
In \cite{creager2020causal}, the authors propose to use causal directed acyclic graphs (DAGs) as a unifying framework to study fairness in dynamical systems but have not reached any approach to achieve long-term fairness. 
In \cite{hu2020fair}, the authors studied fair multiple decision making which also applies SCM and leverages soft interventions to model the deployment of decision models. However, \cite{hu2020fair} is focused on the static fairness of each decision model separately other than the long-term fairness.
As a related line of work, some research (e.g., \cite{jabbari2017fairness,zhang2020fair,wen2021algorithms,yupolicy}) studies long-term fairness in the context of reinforcement learning whose setting is different from supervised learning.
Another related line of work proposes effort-based fairness measures that balance the effort an individual needs to make to change the decision outcome between two groups \cite{heidari2019long,huan2020fairness,guldogan2022equal}. The hypothesis is that effort fairness will encourage rejected individuals to improve their qualifications and prevent the exacerbation of the gap between different groups in the long run.
For example, in \cite{heidari2019long}, the authors propose a framework for characterizing the long-term impact of decision making algorithms on reshaping the distribution and leverage social models to simulate how individuals may respond to the decisions. The most relevant work to this paper is \cite{hu2022achieving}. It studied long-term fair decision making and formulated long-term fairness from the causal perspective. However, \cite{hu2022achieving} requires true causal structure equations for training. In addition, it cannot achieve fairness at a time step that is beyond the training data. These limitations greatly reduce its practical significance.

\section{Proof of Proposition 1}

\begin{proposition}\label{pro:wd}
Let $d$ be the 1-Wasserstein distance given in Definition 1.

For any sensitive attribute-unconscious decision model $f: \mathcal{X} \mapsto \mathcal{A}$ that is Lipschitz continuous, its DP is bounded by $l_f \cdot d$ where $l_f$ is the Lipschitz constant of $f$. If we assume that the true label $Y$ is given by a decision model $g: \mathcal{X} \mapsto \mathcal{A}$ that is Lipschitz continuous and satisfies the equal base rate condition, then the EO of $f$ is bounded by $(l_f+l_g)/P(y)\cdot d$ where $l_g$ is the Lipschitz constant of $g$.
\end{proposition}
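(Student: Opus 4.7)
The plan is to invoke the Kantorovich--Rubinstein dual characterization of the 1-Wasserstein distance, which states that for any two probability measures $\mu,\nu$ on $\mathcal{X}$,
\[
W_1(\mu,\nu) \;=\; \sup_{\|h\|_{\mathrm{Lip}}\le 1}\left|\int h\,d\mu - \int h\,d\nu\right|.
\]
In particular, for any $L$-Lipschitz function $h$, the function $h/L$ is 1-Lipschitz, giving the useful inequality $\left|\int h\,d\mu - \int h\,d\nu\right| \le L\cdot W_1(\mu,\nu)$. Here $d$ is the 1-Wasserstein distance between the two conditional distributions $P(X\mid S{=}0)$ and $P(X\mid S{=}1)$, so I will apply this inequality against these two measures throughout.

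For the DP bound, I would first write the demographic parity gap as $|\mathbb{E}[f(X)\mid S{=}0] - \mathbb{E}[f(X)\mid S{=}1]|$, i.e., as the difference of the expectations of $f$ under the two conditional distributions. Since $f$ is $l_f$-Lipschitz by assumption, applying the dual bound to $f$ against $P(X\mid S{=}0)$ and $P(X\mid S{=}1)$ immediately yields $\mathrm{DP} \le l_f\cdot d$. This half is essentially a one-line consequence of the duality.

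For the EO bound, I would first rewrite equal opportunity under the stated assumptions. Because the true label is $Y=g(X)$ and the equal base rate condition $P(Y{=}1\mid S{=}0)=P(Y{=}1\mid S{=}1)=P(y)$ holds, Bayes' rule gives
\[
P(f(X){=}1\mid Y{=}1,S{=}s) \;=\; \frac{\mathbb{E}[f(X)g(X)\mid S{=}s]}{P(y)},
\]
so the EO gap becomes $\frac{1}{P(y)}\bigl|\mathbb{E}[f(X)g(X)\mid S{=}0] - \mathbb{E}[f(X)g(X)\mid S{=}1]\bigr|$. The key step is then to bound the Lipschitz constant of the product $f\cdot g$: using that both $f$ and $g$ take values in $[0,1]$ (so $\|f\|_\infty,\|g\|_\infty\le 1$) together with the standard product rule $|f(x)g(x)-f(x')g(x')|\le \|g\|_\infty|f(x)-f(x')|+\|f\|_\infty|g(x)-g(x')|$, I obtain that $fg$ is $(l_f+l_g)$-Lipschitz. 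Applying the dual bound to $fg$ and dividing by $P(y)$ delivers $\mathrm{EO}\le \frac{l_f+l_g}{P(y)}\cdot d$.

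The main obstacle is the EO case, specifically the Lipschitz-constant bookkeeping for the product $f\cdot g$: one must use boundedness of the two functions in the correct way so that the resulting constant is $l_f+l_g$ rather than a weighted combination, and one must justify converting the conditional-on-$Y$ probabilities into an expectation of $f\cdot g$ via Bayes' rule and the equal base rate assumption. The DP bound, by contrast, is a direct application of Kantorovich--Rubinstein duality.
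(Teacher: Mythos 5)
Your proposal is correct and follows essentially the same route as the paper: Kantorovich--Rubinstein duality for the DP bound, then Bayes' rule with the equal base rate condition to reduce EO to the expectation gap of $f\cdot g$, whose Lipschitz constant is bounded by $l_f+l_g$ via the product rule and boundedness of $f,g$ in $[0,1]$. Your statement of the product rule is in fact slightly cleaner than the paper's (which pairs $l_f$ with $\sup|f|$ rather than $\sup|g|$), though the distinction is immaterial since both suprema are at most $1$.
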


\begin{proof}
For simplicity, in this proof we drop the superscript $T$ and the notation of the soft intervention for $\X^T(\sigma_{\theta})$.
According to the definition of DP, we have
\begin{equation*}
    \mathrm{DP}(f) = |\mathbb{E}[f(\mathbf{X})|S=s^+] - \mathbb{E}(f(\mathbf{X})|S=s^-)|.
\end{equation*}
Due to the Kantorovich–Rubinstein duality \cite{villani2021topics}, it is straightforward that
\begin{equation*}
\begin{split}
    \mathrm{DP}(f) & \leq \sup_{\lVert f \rVert \leq l_f} \left[ \mathbb{E}_{\x\sim P(\x|s^+)}[f(\x)] - \mathbb{E}_{\x\sim P(\x|s^-)}[f(\x)] \right] \\
    & = l_f \cdot W(P(\X|S=s^+),P(\X|S=s^-)) = l_f \cdot d.
\end{split}
\end{equation*}
On the other hand, we have
\begin{equation*}
    \mathrm{EO}(f) = |\mathbb{E}[f(\X)|Y\!=\!1, S\!=\!s^+] - \mathbb{E}(f(\X)|Y\!=\!1, S\!=\!s^-)|.
\end{equation*}
Since we assume that $Y$ is given by $g$ and $g$ satisfies the equal base rate condition, we have that $P(Y|\X,S)=g(\X)$ and $P(Y|S)=P(Y)$. It then follows that
\begin{equation*}
\begin{split}
    & \mathbb{E}[f(\X)|y, s] = \int_{\x} f(\x)P(\x|y,s) d\x \\
    & = \int_{\x} f(\x)P(\x|s)\frac{P(y|\x,s)}{P(y|s)} d\x = \int_{\x} f(\x)P(\x|s)\frac{g(\x)}{P(y)} d\x \\
    & = \frac{1}{P(y)}\mathbb{E}_{\x\sim P(\x|s)}[f(\x)g(\x)].
\end{split}
\end{equation*}
In addition, define $m(x)=f(x)g(x)$ and denote its Lipschitz constant as $l_m$. It is easy to show that $l_m \leq l_f \cdot \sup_{\X} |f(\X)| + l_g \cdot \sup_{\X} |g(\X)|$. Since $h(\X)\leq 1$ and $g(\X)\leq 1$, we have $l_m\leq l_f+l_g$. As a result, we have
\begin{equation*}
\begin{split}
    EO(f) & \leq \frac{l_f+l_g}{P(y)} W(P(\X|S=s^+),P(\X|S=s^-))\\
    & = \frac{l_f+l_g}{P(y)} \cdot d.
\end{split}
\end{equation*}

\end{proof}

\section{Implementations Details and Hyperparameters}
Experiments are performed on the computer with Intel Core i7-9700K CPU and NVIDIA GeForce GTX 1180 GPU. Except for \textbf{LRLF}, other baselines and our framework are used multi-layer fully-connected networks, i.e., \textbf{MLP}, as the classifiers. The details of the model architectures and hyperparameters used in our framework on two datasets are given in Tables \ref{tab:1} and \ref{tab:2}. For a fair comparison, we adopt the same network structure and parameter settings for our decision model $h_{\theta}$.
Both datasets are split into train/validation/test sets with the ratio 70/10/20. The models are trained on the train sets and the hyperparameters are chosen on the validation sets. The reported results are calculated on the test sets.

\section{Data Generation}
\subsubsection{Synthetic Dataset.} We generate the synthetic time series dataset based on the causal time series graph shown in Figure 1 in the main paper.
Each sample at each time step in the time series includes a sensitive feature $S$, profile features $\textbf{X}^t$ and a decision $Y^t$. The samples at the initial time step $\textbf{X}^1,Y^{1}$ are generated by calling the data generation function (i.e., make\_classification) of scikit-learn package. Then, we cluster the generated samples into two groups and assign $S$ to each sample according to the cluster it belongs to. To generate the data samples in the remaining time steps, we design a procedure by simulating the bank loan system in the real world. We first train a neural network classifier $h_{\theta^*}$ on $S,\textbf{X}^1,Y^{1}$ and treat it as the ground-truth model. For each time step $t$, classifier $h_{\theta^*}$ takes as inputs $S$ and $\textbf{X}^t$ and outputs a probability distribution over $Y^{t}$. We then sample $Y^{t}$ from the distribution as shown below:
\begin{equation}
    \begin{split}
    P(Y^t) = h_{\theta^*}(S, \textbf{X}^t) \quad\quad Y^t \sim \text{Bernoulli}(P(Y^t))
    \end{split}
\end{equation}
After that, we update the value of $\X^{t}$ to obtain $\X^{t+1}$ based on the value of $Y^{t}$. We treat $Y^{t}$ as the ground-truth of loan repayment ($Y^{t}=1$) and default ($Y^{t}=0$). An individual with $Y^t=1$ should have a larger probability to be predicted as $1$ in the next time step, and vice versa. Therefore, we update the value of $\X^{t}$ according to the value of $Y^{t}$ as well as the gradient of a loss function between the predicted probability and label 1, as given below:
\begin{equation}
    \begin{split}
    &\textbf{X}^{t+1} = \textbf{X}^{t} - \epsilon\cdot (2Y^t-1) \cdot \frac{\partial \mathcal{L}(h_{\theta^*}(S,\X^t), \textbf{1})}{\partial \textbf{X}^t} \\
    \end{split}
\end{equation}
where the parameter $\epsilon$ controls the magnitude of changes in $\textbf{X}^t$. As a result, $\X^{t+1}$ will be predicted closer to label 1 if $Y^t=1$, and will be predicted further from label 1 if $Y^t=0$.
Following above generation rules, we generate a 10-step synthetic time series dataset with 10000 instances and $\textbf{X}^t$ is 6 dimensional vector. We refer to this dataset SimLoan.

\subsubsection{Semi-Synthetic Dataset.}
We also generage semi-synthetic data by leveraging the real-world Taiwan dataset as the initial data at $t=1$. A ground-truth classifier and similar generation rules of change are used to generate subsequent decisions $Y^1, ..., Y^l$ and profile features $\textbf{X}^2, ..., \textbf{X}^l$. There are 10000 instances in the initial data and they are randomly and equally sampled from groups by $S$ and $Y$ for balance. Like the SimLoan dataset, this dataset is also made up of 10 steps. We choose SEX as the sensitive feature $S$ and BILL\_ATM1 - BILL\_ATM6 as the profile features in $\textbf{X}$. We refer to this dataset Taiwan.

\begin{table}[h]
\renewcommand\arraystretch{1.2}
\caption{The architectures of CLF and $h_\theta$ and hyperparameters for both datasets}
\centering
\label{tab:1}
\begin{tabular}{cccc}
\hline
\multirow{2}{*}{Layer} & \multirow{2}{*}{Inputs} & \multicolumn{2}{c}{Output Dim} \\ \cline{3-4}
                       &                         & SimLoan   & Taiwan   \\ \hline
X                      &                         & 6           & 6                \\
S                      &                         & 1           & 1                \\
FC\_1                  & {[}X, S{]}              & 32          & 16               \\
FC\_2                  & FC\_1                   & 64          & 32               \\
FC\_3                  & FC\_2                   & 1           & 1                \\ \hline
Optimizer              & Adam                    &             &                  \\
Learning rate          & 0.001                   &             &                  \\
Batch size             & 512                     &             &                  \\
$\lambda_u$            &                         & 1.0         & 1.0              \\
$\lambda_s$            &                         & 2.1         & 0.2              \\
$\lambda_l$            &                         & 128.4       & 40.0             \\\hline
\end{tabular}
\end{table}

\begin{table}[h]
\renewcommand\arraystretch{1.2}
\caption{The architecture of RCGAN and hyperparameters for both datasets}
\centering
\label{tab:2}
\begin{tabular}{cccc}
\hline
\multirow{2}{*}{Layer} & \multirow{2}{*}{Inputs} & \multicolumn{2}{c}{Output Dim} \\ \cline{3-4} 
                       &                         & SimLoan   & Taiwan   \\ \hline
X/Z             &                         & 6           & 6                \\
S/Y             &                         & 1           & 1                \\ \hline
Generator     &                         &             &                  \\
GRU\_1        & {[}Z, S, Y{]}       & 64          & 64               \\
GRU\_2        & GRU\_1                  & 64          & 64               \\
FC\_1         & GRU\_2                  & 6           & 6                \\
Penalty       &                         &             &                  \\
MMD           & {[}X, FC1{]}            & 1           & 1                \\ \hline
Discriminator &                         &             &                  \\
GRU\_1        & FC\_1                   & 64          & 64               \\
GRU\_2        & GRU1                    & 64          & 64               \\
FC\_1         & GUR\_2                  & 1           & 1                \\ \hline
Opimizer      & Adam                    &             &                  \\
Learning rate & 0.001                   &             &                  \\
Batch size    & 512                     &             &                  \\
$\gamma$         & 100                     &             &                  \\ \hline
\end{tabular}
\end{table}